\title{Learning with a Strong Adversary}
\author{Ruitong Huang, Bing Xu, Dale Schuurmans and Csaba Szepesv\'ari \\
Department of Computer Science\\
University of Alberta\\
Edmonton, AB T6G 2E8, Canada \\
\texttt{\{ruitong,bx3,daes,szepesva\}@ualberta.ca} \\
}
\newcommand\numberthis{\addtocounter{equation}{1}\tag{\theequation}}
\newcommand{\uZ}{\underline{Z}}
\newcommand{\cU}{\mathcal{U}}
\newcommand{\cV}{\mathcal{V}}
\newcommand{\cN}{\mathcal{N}}
\newcommand{\cW}{\mathcal{W}}
\newcommand{\cX}{\mathcal{X}}
\newcommand{\mI}{\mathbb{I}}
\newcommand{\real}{\mathbb{{R}}}
\newtheorem{prop}{Proposition}
\newtheorem{remark}{Remark}
\begin{document}

\maketitle

\begin{abstract}
The robustness of neural networks to intended perturbations has recently 
attracted significant attention. 
In this paper, we propose a new method, 
\emph{learning with a strong adversary},
that learns robust classifiers from supervised data
by generating adversarial examples as an intermediate step. 
A new and simple way of finding adversarial examples is presented 
that is empirically 
stronger
than existing approaches
in terms of the accuracy reduction as a function of perturbation magnitude.
Experimental results demonstrate that resulting learning method
greatly improves the robustness of the classification models produced.
\end{abstract}

\section{Introduction}
Deep Neural Network (DNN) models have recently demonstrated impressive learning 
results in many visual and speech classification problems 
\citep{krizhevsky2012imagenet,hinton2012deep}.
One reason for this success is believed to be the expressive capacity of
deep network architectures.
Even though classifiers are typically evaluated by their 
misclassification rate,
robustness is also a highly desirable property:
intuitively, it is desirable for a classifier to be `smooth'
in the sense that a small perturbation of its input should not change its
predictions significantly.
An intriguing recent discovery is that DNN models 
do not typically possess such a robustness property 
\citep{szegedy2013intriguing}.
An otherwise highly accurate DNN model can be fooled into misclassifying
typical data points by introducing a human-indistinguishable perturbation 
of the original inputs.
We call such a perturbed data set `adversarial examples'.  
An even more curious fact is that the same set of such adversarial examples 
is also misclassified by a diverse set of classification models, such as KNN, Boosting Tree,
even if they are trained with different architectures and 
different hyperparameters. 

Since the appearance of \citet{szegedy2013intriguing},
increasing attention has been paid to the curious phenomenon of
`adversarial perturbation' in the deep learning community;
see, for example
\citep{goodfellow2014explaining,fawzi2015analysis,miyato2015distributional,nokland2015improving,tabacof2015exploring}. 
\citet{goodfellow2014explaining} suggest that one reason for the
detrimental effect of adversarial examples lies in the implicit
linearity of the classification models in high dimensional spaces.
Additional exploration by \citet{tabacof2015exploring}
has demonstrated that, for image classification problems,
adversarial images inhabit large "adversarial pockets" in the pixel space. 
Based on these observations, 
different ways of finding adversarial examples have been proposed, 
among which the most relevant to our study is that of
\citep{goodfellow2014explaining},
where a linear approximation is used to obviate the need for 
any auxiliary optimization problem to be solved.
In this paper, we further investigate the role of adversarial training
on classifier robustness
and propose a simple new approach to finding 'stronger' adversarial examples. 
Experimental results suggest that the proposed method is more 
effective
than previous approaches
in the sense that the resulting DNN classifiers obtain
worse performance under the same magnitude of perturbation.

The main achievement of this paper is a training method that 
is able to produce robust classifiers with high classification accuracy 
in the face of stronger data perturbation.
The approach we propose differs from previous approaches in a few ways.
First,
\citet{goodfellow2014explaining} suggests using an augmented objective
that combines the original training objective with an additional objective 
that is measured after after the training inputs have been perturbed.
Alternatively, \citep{nokland2015improving} suggest, 
as a specialization of the method in \citep{goodfellow2014explaining}, 
to only use the objective defined on the perturbed data. 
However, there is no theoretical analysis to justify that 
classifiers learned in this way are indeed robust;
both methods are proposed heuristically.
In our proposed approach,
we formulate the learning procedure as a min-max problem that forces the 
learned DNN model to be robust against adversarial examples, so that the learned classifier is inherently robust.
In particular, we allow an adversary to apply perturbations to each 
data point in an attempt to maximize classification error,
while the learning procedure attempts to minimize misclassification error 
against the adversary.
We call this learning procedure `learning with a strong adversary'.
Such min-max formulation has been discussed specifically in \citep{goodfellow2014explaining}, but emphasis is on its regularization effect particularly for logistic regression.
Our setting is more general and applicable to different loss functions and different types of perturbations, and is the origin of our learning procedure.\footnote{ Analysis of the regularization effect of such min-max formulation is postponed in the appendix, since it is not closly related to the main content of the paper.}
It turns out that an efficient method for finding such adversarial examples 
is required as an intermediate step to solve such a min-max problem,
which is the first problem we address.
Then we develop the full min-max training procedure that incorporates
robustness to adversarially perturbed training data.
The learning procedure that results turns out to have some similarities
to the one proposed in \citep{nokland2015improving}.
Another min-max formulation is proposed in \citet{miyato2015distributional} but still with the interpretation of regularization. 
These approaches are based on significantly 
different understandings of this problem.
Recently, a theoretical exploration of the robustness of classifiers 
\citep{fawzi2015analysis} suggests that, as expected, 
there is a trade-off between expressive power and robustness. 
This paper can be considered as an exploration into this same
trade-off from an engineering perspective.


\if0 
Lastly, \citep{miyato2015distributional} and \citep{nokland2015improving} also use the adversarial examples to help the training of the DNN networks.
Improvements on the MNIST data set are reported using adversarial training. 
Such learning procedure is also interpreted as a better regularization compared to dropout in \citep{nokland2015improving}.
We also further investigate such learning procedure on the binary logistic regression problem. 
We show three interesting properties of 
learning with a strong adversary 
for logistic regression, 
which provides deeper understanding of its regularizing behavior. 
We test our algorithm on both MNIST and CIFAR-10. 
Experimental results do not suggest superiority of 
learning with a strong adversary 
versus dropout. 
However, we succeed to learn a much more robust network that is slightly worse than, if not fairly the same as, the one trained with dropout in misclassification rate. 
\fi 

The remainder of the paper is organized as follows.
First, we propose a new method for finding adversarial examples 
in Section \ref{sec:FindingAdExam}. 
Section \ref{sec:Robustness} is then devoted to developing
the main method:
a new procedure for learning with a stronger form of adversary. 
Finally, we provide an experimental evaluation 
of the proposed method
on MNIST and CIFAR-10 in 
Section \ref{sec:Experiments}.    


\subsection{Notations}

We denote the supervised training data by 
$\uZ = \{(x_1, y_1),\ldots, (x_N,y_N) \}$. 
Let $K$ be the number of classes in the classification problem.  
The loss function used for training will be denoted by $\ell$. 
Given a norm $\|\cdot\|$, let $\|\cdot\|_*$ denote its dual norm,
such that $\|u\|_* = \max_{\|v\| \le 1} \langle u,\, v\rangle$. 
Denote the network by $\cN$ whose last layer is a softmax layer 
$g(x) \triangleq \alpha = (\alpha_1, \ldots, \alpha_K)$
to be used for classification. So $\cN(x)$ is the predicted label for the sample $x$.


\section{Finding adversarial examples}
\label{sec:FindingAdExam}

Consider an example $(x,y) \in \cX\times \{1,2,\ldots, K\}$ 
and assume that $\cN(x)=y$, 
where $y$ is the true label for $x$.
Our goal is to find a small perturbation $r\in \cX$ so that $\cN(X+r)\neq y$. 
This problem was originally investigated by \citet{szegedy2013intriguing}, 
who propose the following perturbation procedure: given $x$, solve
\begin{align*}
\min_r \|r\|  \quad
s.t. \quad \cN(X+r)\neq \cN(X).
\end{align*}
The simple method we propose to find such a perturbation $r$ is based on the 
linear approximation of $g(x)$, $\hat{g}(x+r) = g(x) + Hr$, 
where $H = \frac{\partial g}{\partial w}\vert_x$ is the Jacobian matrix.

As an alternative, we consider the following question: 
for a fixed index $j\neq y$, 
what is the minimal $r_{(j)}$ satisfying $\cN(x+r_{(j)}) = j$? 
Replacing $g$ by its linear approximation $\hat{g}$, 
one of the necessary conditions for such a perturbation $r$ is:
\[
H_jr_{(j)} - H_yr_{(j)} \geq \alpha_y - \alpha_j,
\]
where $H_j$ is the $j$-th row of $H$.  
Therefore, the norm of the optimal $r_{(j)}^*$ is greater than the following objective value:
\begin{align*} 
\min_{r_{(j)}} \|r_{(j)}\| \numberthis \label{eq:duelnorm} \quad 
s.t. \quad H_jr_{(j)} - H_yr_{(j)}\geq \alpha_y - \alpha_j.
\end{align*}
The optimal solution to this problem is provided in 
Proposition \ref{prop:duelnorm}.

\begin{prop}
\label{prop:duelnorm}
It is straightforward that the optimal objective value is 
$\|r_{(j)}\| = \frac{\alpha_y - \alpha_j}{\|H_j - H_y\|_*}$.
In particular,
the optimal $r_{(j)}^*$ for common norms are:
\begin{enumerate}
\item 
If $\|\cdot\|$ is the $L_2$ norm,
then $r_{(j)}^* = \frac{\alpha_y - \alpha_j}{\|H_j - H_y\|_2^2}(H_j - H_y)$;
\item 
If $\|\cdot\|$ is the $L_{\infty}$ norm, 
then $r_{(j)}^* = \frac{\alpha_y - \alpha_j}{\|H_j - H_y\|_1} \text{sign}(H_j - H_y)$;
\item 
If $\|\cdot\|$ is the $L_1$ norm, 
then $r_{(j)}^* = \frac{c}{\|H_j - H_y\|_{\infty}}e_k$ 
where $k$ satisfies $|(H_j - H_y)_k| = \|H_j - H_y\|_{\infty}$. 
Here $V_k$ is the $k$-th element of $V$. 
\end{enumerate}
\end{prop}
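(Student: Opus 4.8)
The plan is to treat the optimization problem \eqref{eq:duelnorm} as the minimization of a norm subject to a single linear inequality, and to solve it by pairing a dual-norm lower bound with an explicit construction that attains it. Writing $d \triangleq H_j - H_y$ (viewed as a vector) and $c \triangleq \alpha_y - \alpha_j$, the constraint reads $\langle d, r_{(j)}\rangle \geq c$. Since $y$ is the predicted label we have $\alpha_y \geq \alpha_j$, so $c \geq 0$; the interesting case is $c > 0$, while for $c = 0$ the optimum is trivially $r_{(j)} = 0$. First I would establish the lower bound on the objective, and then exhibit a minimizer for each of the three norms.

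For the lower bound I would invoke the defining inequality of the dual norm. From $\|d\|_* = \max_{\|v\|\le 1}\langle d, v\rangle$ one obtains the generalized Cauchy--Schwarz (Hölder) inequality $\langle d, r\rangle \le \|d\|_*\,\|r\|$ for every $r$. Hence any feasible $r_{(j)}$ satisfies $c \le \langle d, r_{(j)}\rangle \le \|d\|_*\,\|r_{(j)}\|$, which rearranges to $\|r_{(j)}\| \ge c/\|d\|_*$. This shows the claimed optimal value is a valid lower bound, and it remains to show it is achieved.

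For achievability I would verify, case by case, that the proposed $r_{(j)}^*$ (i)~is feasible with the constraint tight, $\langle d, r_{(j)}^*\rangle = c$, and (ii)~has norm exactly $c/\|d\|_*$. In each case the construction is simply a rescaling of a vector attaining the dual-norm maximum. For $L_2$ (self-dual, $\|d\|_* = \|d\|_2$) the maximizing direction is $d$ itself; for $L_\infty$ (where $\|d\|_* = \|d\|_1$) it is $\operatorname{sign}(d)$, since $\langle d, \operatorname{sign}(d)\rangle = \|d\|_1$ while $\|\operatorname{sign}(d)\|_\infty = 1$; for $L_1$ (where $\|d\|_* = \|d\|_\infty$) it is the signed unit vector supported on a coordinate $k$ of maximal magnitude $|d_k| = \|d\|_\infty$. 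Scaling each by the factor that makes the constraint hold with equality yields precisely the expressions in the statement, and a one-line computation of the norm then gives $c/\|d\|_*$, matching the lower bound.

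The argument is essentially routine, so there is no serious obstacle; the only points needing care are bookkeeping ones. First, one must confirm the inequality can be met with equality at the optimum, so that the dual-norm bound is tight: if $\langle d, r\rangle > c$ one may rescale $r$ by $\lambda = c/\langle d, r\rangle \in (0,1)$, strictly decreasing the norm while restoring equality, which is exactly why the maximizing directions above are the right ones. Second, for the $L_1$ case the sign must be tracked: the true optimizer is $\tfrac{c}{\|d\|_\infty}\operatorname{sign}(d_k)\,e_k$, so the bare $e_k$ in the statement implicitly assumes $d_k>0$ and should in general carry the sign of $d_k$. Finally, I would note that these minimizers need not be unique---for $L_\infty$ any coordinate with $d_i=0$ is free, and for $L_1$ a tie in $\arg\max_k|d_k|$ yields a whole family of solutions---but uniqueness is not claimed.
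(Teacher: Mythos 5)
The paper offers no proof of this proposition---it is simply asserted as ``straightforward''---so there is nothing to diverge from; your argument (H\"older/dual-norm lower bound $\|r\|\ge (\alpha_y-\alpha_j)/\|H_j-H_y\|_*$ plus an explicit rescaled dual-maximizing direction attaining it in each case) is the standard one and is correct. You also rightly flag that the $L_1$ case as stated is imprecise: the minimizer should be $\frac{\alpha_y-\alpha_j}{\|H_j-H_y\|_\infty}\,\mathrm{sign}\bigl((H_j-H_y)_k\bigr)e_k$, i.e.\ it needs the sign factor (and the numerator $c$ in the paper's item~3 should read $\alpha_y-\alpha_j$ for consistency with the other cases).
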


However, such $r_{(j)}^*$ is necessary 
but not sufficient to guarantee that $\arg\!\max_i \hat{g}(x+r_{(j)})_i = j$. 
The following proposition shows that in order to have $\hat{g}$ make 
a wrong prediction, 
it is enough to use the minimum among all $r_{(j)}^*$'s. 

\begin{prop}
\label{prop:AdvR}
Let $I = \arg\!\min_i \|r_{(i)}^*\|$. Then $r_I^*$ is the solution of the following problem:
\begin{align*}
 \min_r \, \|r\|  \quad 
s.t. \quad \arg\!\max_i \, \left(\hat{g}(X+r)\right)_i \neq y.
\end{align*} 
\end{prop}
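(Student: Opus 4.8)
The plan is to observe that the apparently complicated constraint $\arg\!\max_i (\hat{g}(X+r))_i \neq y$ decomposes into a union of exactly the linear constraints already treated in \eqref{eq:duelnorm}, after which the claim reduces to the elementary fact that minimizing a norm over a union of sets equals the smallest of the minima taken over the individual sets.

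First I would rewrite the constraint explicitly. Since $\hat{g}(X+r)_i = \alpha_i + H_i r$, the event that $y$ fails to remain the top-scoring label is equivalent to the existence of some $j \neq y$ with $\hat{g}(X+r)_j \geq \hat{g}(X+r)_y$, that is, $\alpha_j + H_j r \geq \alpha_y + H_y r$, which rearranges to $H_j r - H_y r \geq \alpha_y - \alpha_j$ — precisely the constraint appearing in problem \eqref{eq:duelnorm}. Writing $S_j = \{\, r : (H_j - H_y) r \geq \alpha_y - \alpha_j \,\}$ for the feasible set of the $j$-th subproblem, the feasible region of the problem in the proposition is therefore exactly $\bigcup_{j \neq y} S_j$.

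Next I would exploit this decomposition. Because the feasible set is a union, we have
\[
\min_{\,r\,:\,\arg\!\max_i (\hat{g}(X+r))_i \neq y} \|r\| \;=\; \min_{j \neq y}\; \min_{r \in S_j} \|r\| .
\]
By the statement and proof of Proposition \ref{prop:duelnorm}, the inner minimum over $S_j$ equals $\|r_{(j)}^*\|$ and is attained at $r_{(j)}^*$. Hence the overall optimal value is $\min_{j \neq y} \|r_{(j)}^*\| = \|r_{(I)}^*\|$ with $I = \arg\!\min_i \|r_{(i)}^*\|$, and it is attained at $r_{(I)}^*$, which lies in $S_I$ and is thus feasible for the original problem. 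This shows $r_{(I)}^*$ solves the stated minimization.

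The step needing the most care is the gap between \emph{necessary} and \emph{sufficient} flagged just before the proposition. The key point to emphasize is that, although $r \in S_j$ is only necessary (not sufficient) for the linearized prediction to equal the specific label $j$, it is \emph{exactly equivalent} to the condition that $y$ loses to $j$; and the present problem only asks that $y$ not be the winner, not that any particular competitor wins. Thus the union $\bigcup_{j\neq y} S_j$ captures the constraint without loss, which is what makes the reduction valid. A secondary subtlety is the handling of ties: I would adopt the convention, consistent with the ``$\geq$'' in \eqref{eq:duelnorm}, that the argmax differs from $y$ as soon as some competing logit reaches the $y$-logit, so that each $S_j$ is closed and the infimum is genuinely attained rather than merely approached.
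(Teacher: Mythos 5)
Your argument is correct: rewriting the failure event $\arg\!\max_i(\hat{g}(X+r))_i \neq y$ as the union $\bigcup_{j\neq y} S_j$ of the half-spaces from \eqref{eq:duelnorm}, and then using that the minimum of $\|r\|$ over a union is the least of the per-set minima, is exactly the reasoning the paper relies on — the paper in fact states Proposition \ref{prop:AdvR} without any written proof, so your write-up supplies the omitted justification rather than diverging from it. Your two flagged subtleties (that membership in $S_j$ is equivalent to ``$y$ loses to $j$'' even though it is only necessary for ``$j$ wins,'' and the tie-breaking convention needed for attainment of the minimum on the closed sets $S_j$) are precisely the points the paper glosses over, and handling them as you do makes the claim rigorous.
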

\vspace{-5mm}
Putting these observations together, we achieve an algorithm 
for finding adversarial examples, as shown in Algorithm \ref{alg:FindingAdv}.
\begin{algorithm} [h]
\caption{Finding Adversarial Examples}
\label{alg:FindingAdv}
\begin{algorithmic}[1]
\INPUT $(x, y)$; Network $\cN$; 
\OUTPUT $r$
\STATE Compute $H$ by performing forward-backward propagation from the input layer to the softmax layer $g(x)$
\FOR {$j = 1,2,\ldots, K$}
\STATE Compute $r_{(j)}^*$ from Equation \eqref{eq:duelnorm}
\ENDFOR
\STATE Return $r = r_{(I)}^*$ where $I = \arg\!\min_i \|r_{(i)}^*\|$.
\end{algorithmic}
\end{algorithm}

\section{Toward Robust Neural Networks}
\label{sec:Robustness}

We enhance the robustness of a neural network model
by preparing the network for the worst examples by training with the following
objective:

\begin{equation}
\label{eq:LWA}
\min_g \sum_i \max_{\|r^{(i)}\|\leq c} \ell(g(x_i+r^{(i)}), y_i);
\end{equation}

\noindent
where $g$ ranges over functions expressible by the network model $\cN$
(i.e.\ ranging over all parameters in $\cN$).
In this formulation,
the hyperparameter $c$ that controls 
the magnitude of the perturbation needs to be tuned.
Note that when 
$\ell(g(x_i+r^{(i)}), y_i) = \mI_{(\max_j (g(x_i+r^{(i)})_j) \neq y_i)}$, 
the objective function is the misclassification error under perturbations. 
Often, $\ell$ is a surrogate for the misclassification loss 
that is differentiable and smooth.
Let $L_i(g) = \max_{\|r^{(i)}\|_2\leq c}\ell(g(x_i+r^{(i)}), y_i)$. 
Thus the problem is to find $g^* = \arg \min_g \sum_i L_i(g)$. 

{\bf Related Works:} \\
1. When taking $\ell$ to be the logistic loss, $g$ to be a linear function, and the norm for perturbation to be $\ell_{\infty}$, then the inner max problem has analytical solution, and Equation \eqref{eq:LWA} matches the learning objective in Section 5 of \citep{goodfellow2014explaining}. \\
2. Let $\ell$ be the negative log function and $g$ be the probability predicted by the model. Viewing $y_i$ as a distribution that has weight 1 on $y_i$ and $0$ on other classes, then the classical entropy loss is in fact $D_{KL}(y_i \| p)$ where $p = g(x_i)$. Using these notions, Equation \eqref{eq:LWA} can be rewritten as $\min_f \sum \max_{r^{(i)}} D_{KL}(y_i\, \| \tilde{p})$ where $\tilde{p} = g(x_i + r^{(i)})$. Similarly, the objective function proposed in \citep{goodfellow2014explaining} can be generalized as $\alpha D_{KL}(y_i \| p) + (1-\alpha) \max_{r^{(i)}} D_{KL}(y_i \| \tilde{p})$. Moreover, the one proposed in \citet{miyato2015distributional} can be interpreated as $D_{KL}(y_i \| p) + D_{KL}(p \| \tilde{p}) $. Experiments shows that our approach is able to achieve the best robustness while maintain high classification accuracy.

To solve the problem \eqref{eq:LWA} using SGD, one needs to compute the 
derivative of $L_i$ with respect to (the parameters that define) $g$. 
The following preliminary proposition 
suggests a way of computing this derivative. 

\begin{prop}
\label{prop:dL}
Given $h: \cU\times \cV \rightarrow \cW$ differentiable almost everywhere, 
define $L(v) = \max_{u\in\cU} h(u,v)$. 
Assume that $L$ is uniformly Lipschitz-continuous as a function of $v$, 
then the following results holds almost everywhere:
\[
\frac{\partial L}{\partial v}(v_0) = \frac{\partial h}{\partial v}(u^*, v_0),
\]
where $u^* = \arg\max_u h(u,v_0)$. 
\end{prop}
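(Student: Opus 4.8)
The plan is to recognize the statement as a version of Danskin's (envelope) theorem and to prove it through a ``touching from below'' argument. The two hypotheses play complementary roles. Since $L$ is uniformly Lipschitz, Rademacher's theorem guarantees that $L$ is differentiable at almost every $v_0$, so the left-hand side $\frac{\partial L}{\partial v}(v_0)$ is well defined a.e.; the a.e.\ differentiability of $h$ is what I will use to make sense of the right-hand side $\frac{\partial h}{\partial v}(u^*,v_0)$. First I would fix a point $v_0$ at which $L$ is differentiable and pick $u^* \in \arg\max_u h(u,v_0)$, whose existence is presupposed by the statement.

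The key inequality is that the slice $v \mapsto h(u^*,v)$ lies below $L$ everywhere and touches it at $v_0$: indeed $L(v) = \max_u h(u,v) \geq h(u^*,v)$ for every $v$, while $L(v_0) = h(u^*,v_0)$ by the choice of $u^*$. Hence $\phi(v) := L(v) - h(u^*,v)$ satisfies $\phi \geq 0 = \phi(v_0)$, so $v_0$ is a global minimizer of $\phi$. I would then compare one-sided difference quotients in an arbitrary direction $d$. Dividing the inequality $L(v_0+td)-L(v_0) \geq h(u^*,v_0+td)-h(u^*,v_0)$ by $t$ and letting $t\to 0^+$ gives $\frac{\partial L}{\partial v}(v_0)\,d \geq D^+$, while the same inequality divided by $t<0$ reverses and, as $t\to 0^-$, yields $\frac{\partial L}{\partial v}(v_0)\,d \leq D^-$, where $D^+$ and $D^-$ denote the right and left directional derivatives of $h(u^*,\cdot)$ at $v_0$ in direction $d$. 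If $h(u^*,\cdot)$ is differentiable at $v_0$ then $D^+ = D^- = \frac{\partial h}{\partial v}(u^*,v_0)\,d$, and the two inequalities force $\frac{\partial L}{\partial v}(v_0)\,d = \frac{\partial h}{\partial v}(u^*,v_0)\,d$ for every $d$, which is exactly the claimed identity.

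The step I expect to be the main obstacle is precisely the one slipped past above: justifying, for almost every $v_0$, that the slice $v\mapsto h(u^*,v)$ really is differentiable at $v_0$, so that the symbol $\frac{\partial h}{\partial v}(u^*,v_0)$ is legitimate. The hypothesis only gives differentiability of $h$ off a Lebesgue-null subset of $\cU\times\cV$, whereas the argument evaluates $h$ along the selection $v_0 \mapsto (u^*(v_0),v_0)$, and a priori this graph could be swallowed by that null set. To close the gap I would need either a measurable-selection plus Fubini argument showing that, once the maximizer is accounted for, the projection of the bad set meets almost no $v_0$, or a standing regularity assumption (for instance, $h$ continuously differentiable in $v$ for each fixed $u$ with $\cU$ compact) under which $D^+$ and $D^-$ coincide automatically. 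Under such a regularity assumption uniqueness of $u^*$ is not even needed: every maximizer yields the same one-sided derivative at a differentiability point of $L$, so the sandwich closes regardless of which $u^*$ is selected.
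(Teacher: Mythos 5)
Your argument is essentially the paper's own proof: both fix a point $v_0$ where Rademacher's theorem (applied to the uniformly Lipschitz $L$) gives differentiability, observe that the slice $v\mapsto h(u^*,v)$ touches $L$ from below at $v_0$, and conclude that its derivative must coincide with $\frac{\partial L}{\partial v}(v_0)$ --- the only difference being that you carry out the two-sided difference-quotient sandwich explicitly, whereas the paper outsources it to a cited subderivative lemma (Proposition 2 of \citealp{neu2012apprenticeship}) together with the fact that the Fr\'echet subdifferential is a singleton at a point of differentiability. The obstacle you flag --- that almost-everywhere differentiability of $h$ on $\cU\times\cV$ does not by itself guarantee differentiability of the slice $v\mapsto h(u^*,v)$ at $v_0$ --- is a genuine issue, but it is equally unaddressed in the paper, whose proof simply asserts ``since $h$ is differentiable'' at that step; so your proposal matches the paper's argument, gap included.
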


\begin{proof}
Note that $L$ is uniformly Lipschitz-continuous, 
therefore by Rademacher's theorem, $L$ is differentiable almost everywhere. 
For $v_0$ where $L$ is differentiable, the Fr\'{e}chet subderivative of $L$ is actually a singleton set of its derivative.

Consider the function $\hat{L}(v) = h(u^*,v)$. 
Since $h$ is differentiable, $\frac{\partial h}{\partial v}(u^*, v_0)$ 
is the derivative of $\hat{L}$ at point $v_0$. 
Also $\hat{L}(v_0) = L(v_0)$. 
Thus, by Proposition 2 of \citep{neu2012apprenticeship}, 
$\frac{\partial h}{\partial v}(u^*, v_0)$ also belongs to the 
subderivative of $L$. 
Therefore, 
\[
\frac{\partial L}{\partial v}(v_0) = \frac{\partial h}{\partial v}(u^*, v_0).
\]
\end{proof}
\vspace{-5mm}
The differentiability of $h$ in Proposition \ref{prop:dL} usually holds. 
The uniformly Lipschitz-continuous of neural networks was also discussed 
in the paper of \citet{szegedy2013intriguing}. 
It still remains to compute $u^*$ in Proposition \ref{prop:dL}. 
In particular given $(x_i, y_i)$,
we need to solve
\begin{equation}
\label{eq:advexam}
\max_{\|r^{(i)}\|\leq c} \ell(g(x_i+r^{(i)}), y_i).
\end{equation}
We postpone the solution for the above problem to the end of this section.
Given that we can have an approximate solution for Equation \eqref{eq:advexam},
a simple SGD method to compute a local solution for Equation \eqref{eq:LWA} 
is then shown in Algorithm \ref{alg:LWA}.

\begin{algorithm} [h]
\caption{Learning with an Adversary}
\label{alg:LWA}
\begin{algorithmic}[1]
\INPUT $(x_i, y_i)$ for $1\leq i\leq N$; Initial $g_0$; 
\OUTPUT $\hat{g}$
\FOR{ $t = 1,2, \ldots, T$}
\FOR{$(x_i, y_i)$ in the current batch}
\STATE Use forward-backward propagation to compute $\frac{\partial g}{\partial x}$
\STATE Compute $r^*$ as the optimal perturbation to $x$, using the proposed methods in Section \ref{subsec:ComputeR}
\STATE Create a pseudo-sample to be $(\hat{x}_i = x_i + c \frac{r^*}{\|r^*\|_2}, y_i) $
\ENDFOR
\STATE Update the network $\hat{g}$ using forward-backward propagation on the pseudo-sample $(\hat{x}_i, y_i)$ for $1\leq i \leq N$
\ENDFOR 
\STATE Return $\hat{g}$.
\end{algorithmic}
\end{algorithm}

For complex prediction problems, deeper neural networks are usually proposed, 
which can be interpreted as consisting of two parts: 
the lower layers of the network can be interpreted as learning a 
representation for the input data, 
while the upper layers can be interpreted as learning a classification model
on top of the learned representation.
The number of layers that should be interpreted as providing representations 
versus classifications is not precise and varies between datasets;
we treat this as a hyperparameter in our method.
Given such an interpretation, 
denote the representation layers of the network as $\cN_{\rm rep}$ 
and the classification layers of the network as $\cN_{\rm cla}$. 
We propose to perform the perturbation over the output of $\cN_{\rm rep}$ 
rather than the raw data. 
Thus the problem of learning with an adversary can be formulated as follows:
\begin{equation}
\label{eq:LWA2}
\min_{\cN_{\rm rep}, \cN_{\rm cla}} \sum_i \max_{\|r^{(i)}\|\leq c} \ell\left(\cN_{\rm cla}\left(\cN_{\rm rep}(x_i)+r^{(i)}\right), y_i\right).
\end{equation}
Similarly, Equation \eqref{eq:LWA2} can be solved by the following SGD method 
given  in Algorithm \ref{alg:LWA2}. 

\begin{algorithm} [h]
\caption{Learning with an Adversary in the Split Network Interpretation}
\label{alg:LWA2}
\begin{algorithmic}[1]
\INPUT $(x_i, y_i)$ for $1\leq i\leq N$; Initial $\cN_{\rm cla}$ and $\cN_{\rm rep}$; 
\OUTPUT $\hat{f}$
\FOR{ $t = 1,2, \ldots, T$}
\FOR{$(x_i, y_i)$ in the current batch}
\STATE Use forward propagation to compute the output of $\cN_{\rm rep}$, $\tilde{x}_i$
\STATE Take $\tilde{x}_i$ as the input for $\cN_{\rm cla}$
\STATE Use forward-backward propagation to compute $\frac{\partial \alpha}{\partial \tilde{x}_i}$
\STATE Compute $r^*$ as the optimal perturbation to $\tilde{x}_i$, using the proposed methods in Section \ref{subsec:ComputeR}
\STATE Create a pseudo-sample to be $(\hat{\tilde{x}}_i = \tilde{x}_i + c \frac{r^*}{\|r^*\|}, y_i) $
\ENDFOR
\STATE Use forward propagation to compute the output of $\cN_{\rm cla}$ on $(\hat{\tilde{x}}_i, y_i)$ for $1\le i\le N$
\STATE Use backward propagation to update both $\cN_{\rm cla}$ 
\STATE Use backward propagation to compute  $\frac{\partial \cN_{\rm rep}}{\partial W}\vert_{\tilde{x_i}, y_i}$
\STATE Update  $\cN_{\rm rep}$ by $ \frac{\partial \ell}{\partial \tilde{x}_i}\vert_{(\hat{\tilde{x}}_i, y_i)}\frac{\partial \cN_{\rm rep}}{\partial W}\vert_{\tilde{x_i}, y_i}$
\ENDFOR 
\STATE Return $\cN_{\rm cla}$ and $\cN_{\rm rep}$
\end{algorithmic}
\end{algorithm}

\subsection{Computing the perturbation}
\label{subsec:ComputeR}
We propose two different perturbation methods based on two different 
principles. 
The first proposed method, 
similar to that of \citep{goodfellow2014explaining}, 
does not require the solution of an optimization problem. 
Experimental results show that this method, compared to the method proposed 
in \citep{goodfellow2014explaining}, is more effective in the sense that,
under the same magnitude of perturbation, 
the 
accuracy reduction in the network is greater.

\subsubsection{Likelihood Based Loss}

Assume the loss function $\ell(x,y) = h(\alpha_y)$,
where $h$ is a non-negative decreasing function. 
A typical example of such a loss would be the logistic regression loss. 
In fact, most of the network models use a softmax layer as the last layer 
and a cross-entropy objective function. 
Recall that we would like to find 
\begin{eqnarray}
r^* = \arg\max_{\|r^{(i)}\|\leq c} h\left(g(x_i+r^{(i)})_{y_i}\right),
\label{eq:rr}
\end{eqnarray}
where $x_i$ could be the raw data or the output of $\cN_{\rm rep}$.
Since $h$ is decreasing, 
$r^* = \arg\min_{\|r^{(i)}\|\leq c} g(x_i+r^{(i)})_{y_i}$.

This problem can still be difficult to solve in general. 
Similarly, using its linear approximation
 $\tilde{g}(x_i+r^{(i)})_{y_i}$, 
i.e.\
$g(x_i+r^{(i)})_{y_i} 
\approx 
\tilde{g}(x_i+r^{(i)})_{y_i} = g(x_i)_{y_i}+\langle H_{y_i},\,r^{(i)}\rangle$, 
such that $H = \frac{\partial g}{\partial w}\vert_x$ is the Jacobian matrix,
it is then easy to see that 
using $\tilde{g}(x_i+r^{(i)})_{y_i}$ 
in Equation \eqref{eq:rr},
the solution for $r^*$ can be recovered as
$r^* = \{r:\, \|r\|\le c;\, \langle H_{y_i},\,r^{(i)}\rangle 
= c\|H_{y_i}\|_*\}$.

The optimal solutions for $r^*$ given common norms are:
\begin{enumerate}
\item 
If $\|\cdot\|$ is the $L_2$ norm,
then $r_{(j)}^* = c\,\frac{H_{y_i}}{\|H_{y_i}\|_2}$;
\item 
If $\|\cdot\|$ is the $L_{\infty}$ norm,
then $r_{(j)}^* = c\, \text{sign}(H_{y_i})$;
\item 
If $\|\cdot\|$ is the $L_1$ norm,
then $r_{(j)}^* = c\,e_k$, where $k$ satisfies 
$|H_{y_i}| = \|H_{y_i}\|_{\infty}$. 
\end{enumerate}
Note that the second item here is exactly the method suggested in 
\citep{goodfellow2014explaining}. $\ell_2$ norm is used in \citep{miyato2015distributional} but with different objective function rather than negative log likelihood, as mentioned at the begining of this section.

\subsubsection{Misclassification Based Loss}

In the case that the loss function $\ell$ is a surrogate loss for the 
misclassification rate, it is reasonable to still use the misclassification 
rate as the loss function $\ell$ in Equation \eqref{eq:advexam}.
In this case, the problem in Equation \eqref{eq:advexam} becomes finding a 
perturbation $r: \|r\|\le c$ that forces the network $\cN$ to misclassify $x_i$.
In practice, for $\cN$ to achieve a good approximation, 
$c$ needs to be chosen to have a small value, 
hence it might not be large enough to force a misclassification.
One intuitive way 
to achieve this 
is to perturb by $r$ in the direction proposed in 
Section \ref{sec:FindingAdExam}, 
since such direction is arguably the most damaging direction for 
the perturbation. 
Therefore, 
in this case, we use
\[
r^* = c\, r_I^*/\|r_I^*\|,
\]
where $r_I^*$ is the output of Algorithm \ref{alg:FindingAdv}.
 
\section{Experimental Evaluation}
\label{sec:Experiments}

To investigate the training method proposed above, in conjunction with the
different approaches for determining perturbations,
we consider the MNIST \citep{lecun1998mnist} and CIFAR-10 data sets.
The MNIST data set contains 28x28 grey scale images of handwritten digits. 
We normalize the pixel values into the range [0, 1] by dividing by 256.
The CIFAR-10 \citep{krizhevsky2009learning} dataset is a tiny nature image dataset. CIFAR-10 datasets contains 10 different classes images, each image is an RGB image in size of 32x32. 
Input images are subtracted by mean value 117, and randomly cropped to size 28x28. We also normalize the pixel value into the range (-1, 1) by dividing 256. This normalization is for evaluating perturbation magnitude by $L_2$ norm. 
For both datasets, we randomly choose 50,000 images for training and 10,000 for testing. 

All experiment models are trained by using MXNet \citep{chen2015mxnet}\footnote{Reproduce code: \url{https://github.com/Armstring/LearningwithaStrongAdversary}}. 

\subsection{Finding adversarial examples}

We tested a variety of different perturbation methods on MNIST, including: 
1. Perturbation based on $\alpha$ using the $\ell_2$ norm constraint, 
as shown in Section \ref{sec:FindingAdExam}  (Adv\_Alpha); 
2. Perturbation based on using a loss function with the $\ell_2$ norm
constraint,
as shown in Section \ref{subsec:ComputeR} (Adv\_Loss);
3. Perturbation based on using a loss function with the $\ell_\infty$ norm 
constraint,
as shown in Section \ref{subsec:ComputeR} (Adv\_Loss\_Sign).
In particular, a standard `LeNet' model is trained on MNIST, 
with training and validation accuracy being $100\%$ and $99.1\%$
respectively.
Based on the learned network, different validation sets are then generated 
by perturbing the original data with different perturbation methods. 
The magnitudes of the perturbations range from $0.0$ to $4.0$ in $\ell_2$ norm.
An example of such successful perturbation is show in Table \ref{tab:visualexample}. The classification accuracies on differently perturbed data sets are 
reported in Figure \ref{fig:perturbation}. 
\begin{table}[h]
	\centering
	\caption{An visual example of using different perturbations; The magnitude of all the perturbations are 1.5 in $\ell_2$ norm. The true label of the image is 8. The first perturbed image is still predicted to be 8, while the other 2 perturbed images are predicted to be 3. }
	\label{tab:visualexample}
	\begin{tabular}{|>{\centering\arraybackslash} m{2cm}|>{\centering\arraybackslash} m{2.5cm}|>{\centering\arraybackslash} m{2cm}|>{\centering\arraybackslash} m{2cm}|c |}
		\hline 
		& {\bf Adv\_Loss\_Sign } & {\bf Adv\_Loss} & {\bf Adv\_Alpha} &  {\bf Original Image}  \\  \hline 
		{\bf Perturbed Image} & 
		\includegraphics[width=0.13\columnwidth]{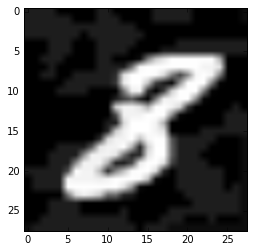}	&
		\includegraphics[width=0.13\columnwidth]{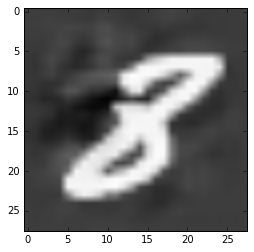} &
		\includegraphics[width=0.13\columnwidth]{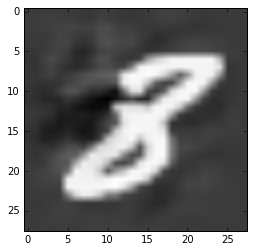}  & 
		\multirow{2}{*}{\includegraphics[width=0.13\columnwidth]{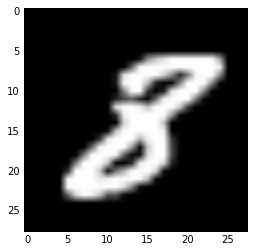}} \\ \cline{1-4} 
		{\bf Noise} &
		\includegraphics[width=0.13\columnwidth]{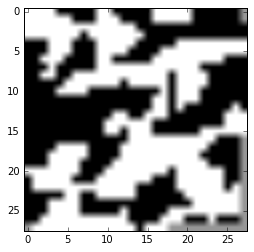}	&
		\includegraphics[width=0.13\columnwidth]{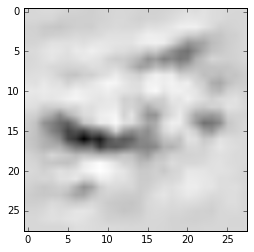} &
		\includegraphics[width=0.13\columnwidth]{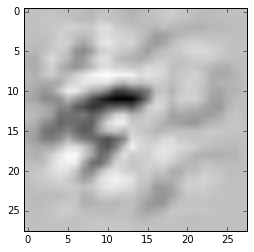}  & \\
		\hline         
	\end{tabular}
\end{table}

\begin{figure}[h]
\begin{center}
	\includegraphics[width = 0.7\textwidth]{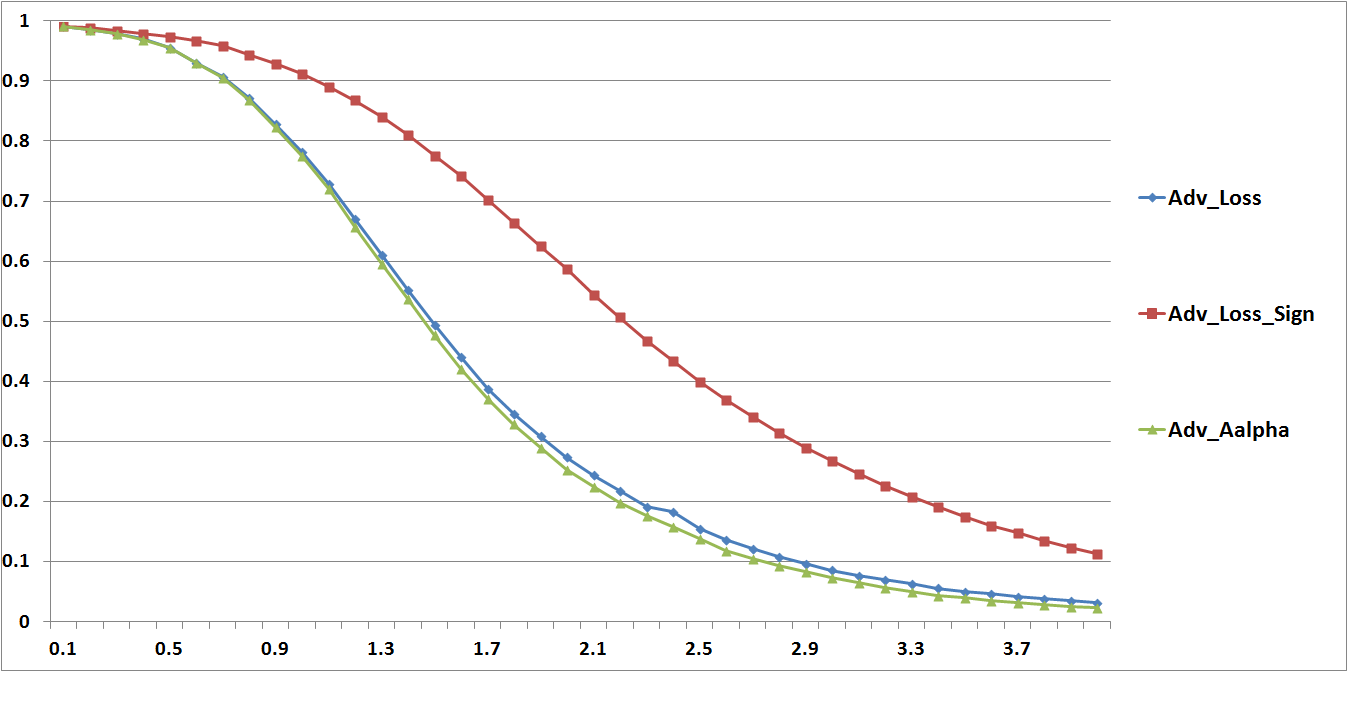}
\end{center}
\caption{Validation accuracies of different perturbation methods. $x$ axis denotes the $\ell_2$ norm of the perturbation.}
\label{fig:perturbation}
\end{figure}

The networks classification accuracy decreases with increasing
magnitude of the perturbation. 
These results suggest that Adv\_Alpha is consistently, but slightly, 
more effective than Adv\_Loss, 
and these two method are significantly more effective than Adv\_Loss\_Sign.

\if0
\emph{\bf Is it reasonable to use $\ell_2$ norm to measure the magnitude?} 
\\
One concern is that the following case might happen: 
a small perturbation in $\ell_2$ norm places most of its weight on a specific 
position, thus change the input image distinguishably. 
For example, given a mnist image of number `2', if the perturbation focuses on the tail, it may change the number to `7' under a perturbation of small $\ell_2$ norm. (but this perturabation has large $\ell_{\infty}$ norm.)
However, we don't observe such a phenomenon in our experiments. 
\fi

\emph{\bf Drawback of using $\alpha$ to find perturbations} 
\\
Note that the difference in perturbation effectiveness between using 
$\alpha$ and using the loss function is small. 
On the other hand, to compute the perturbation using $\alpha$, 
one needs to compute $\frac{\partial \alpha}{\partial x}$, 
which is $K$ times more expensive than the method using the loss function,
which only needs to compute $\frac{\partial \ell}{\partial x}$. 
(Recall that $K$ is the number of classes.) Due to time limit, during the training procedure we only use the adversarial examples generated from the loss function for our experiments.

\subsection{Learning with an adversary}

We tested our overall learning approach on both MNIST and CIFAR-10.
We measure the robustness of each classifier on various adversarial sets.
An adversarial set of the same type for each learned classifier is generated 
based on the targeted classifier.
We generated 3 types of adversarial data sets for the above 5 classifiers 
corresponding to Adv\_Alpha, Adv\_Loss, and Adv\_Loss\_Sign. 
In addition, we also evaluated the accuracy of these 5 classifiers on a 
fixed adversarial set, which is generated based on the `Normal' network 
using Adv\_Loss. 
Finally, we also report the original validation accuracies of 
the different networks. 
We use $\ell_2$ norm for our method to train the network.

\emph{\bf Experiments on MNIST:} 
\\
We first test different training methods on a 2-hidden-layer neural network 
model.
In particular, we considered:
1. Normal back-forward propagation training, $100 \times 100$ (Normal); 
2. Normal back-forward propagation training with Dropout, $200 \times 200$ (Dropout \citep{hinton2012improving}); 
3. The method in \citep{goodfellow2014explaining}, $500 \times 500$ (Goodfellow's method); 
4. Learning with an adversary on raw data, $500 \times 500$ (LWA); 
5. Learning with an adversary at the representation layer,  $200 \times 500$ (LWA\_Rep).
Here $n\times m$ denote that the network has $n$ nodes for the first hidden layer, and $m$ nodes for the second one.
All of the results are tested under perturbations of with the
$\ell_2$ norm constrained to at most $1.5$.

\begin{table}[h]
\caption{Classification accuracies for 2-hidden-layers neural network on MNIST: the best performance on each adversarial sets are shown in bold. The magnitude of perturbations are $1.5$ in $\ell_2$ norm.}
\label{tab:LWADNN}
\begin{center}
\begin{tabular}{lccccc}
\multicolumn{1}{c}{\bf METHODS}  & & & \multicolumn{1}{c}{\bf Validation Sets} \\
       & Validation & Fixed & Adv\_Loss\_Sign & Adv\_Loss & Adv\_Alpha
        \\ \hline \\
Normal   		 & 0.977 	& 0.361 	& 0.287 	& 0.201 	& 0.074 \\
Dropout         	 & 0.982 	& 0.446 	& 0.440 	& 0.321 	& 0.193 \\
Goodfellow's Method   & {\bf0.991}& 0.972 	& 0.939 	& 0.844 	& 0.836 \\
LWA         		 & {\bf0.990}& {\bf0.974}& {\bf0.944}& {\bf0.867}& {\bf0.862} \\
LWA\_Rep         & 0.986 	& 0.797		& 0.819 	& 0.673 	& 0.646 
\end{tabular}
\end{center}
\end{table}

We sumarize the results in Table \ref{tab:LWADNN}. 
Note that the normal method can not afford any perturbation on the validation 
set, showing that it is highly non-robust.
By training with dropout, both the accuracy and robustness of the 
neural network are improved, but robustness remains weak;
for the adversarial set generated by Adv\_Alpha in particular,
the resulting classification accuracy is only $19.3\%$.
Goodfellow's method improves the network's robustness greatly, 
compared to the previous methods. 
However, the best accuracy and the most robustness are both achieved by LWA. 
In particular, on the adversarial sets generated by our methods 
(Adv\_Loss and Adv\_Alpha), the performance is improved from 
$84.4\%$ to $86.7\%$, and from $83.6\%$ to $86.2\%$.
The result of LWA\_Rep is also reported for comparison. 
Overall, it achieves worse performance than Goodfellow's method 
\citep{goodfellow2014explaining} and LWA, but still much more robust than Dropout.  

We also evaluated these learning methods on the LeNet model 
\citep{lecun1998gradient}, which is more complex, including convolution layers. 
We use Dropout for Goodfellow's method and LWA.
The resulting learning curves are reported in Figure \ref{fig:LWALenet}. 
It is interesting that we do not observe the trade-off between robustness 
and accuracy once again;
this phenomenon also occurred with the 2-hidden-layers neural network. 
\begin{figure}[h]
\begin{center}
	\includegraphics[width = 0.7\textwidth]{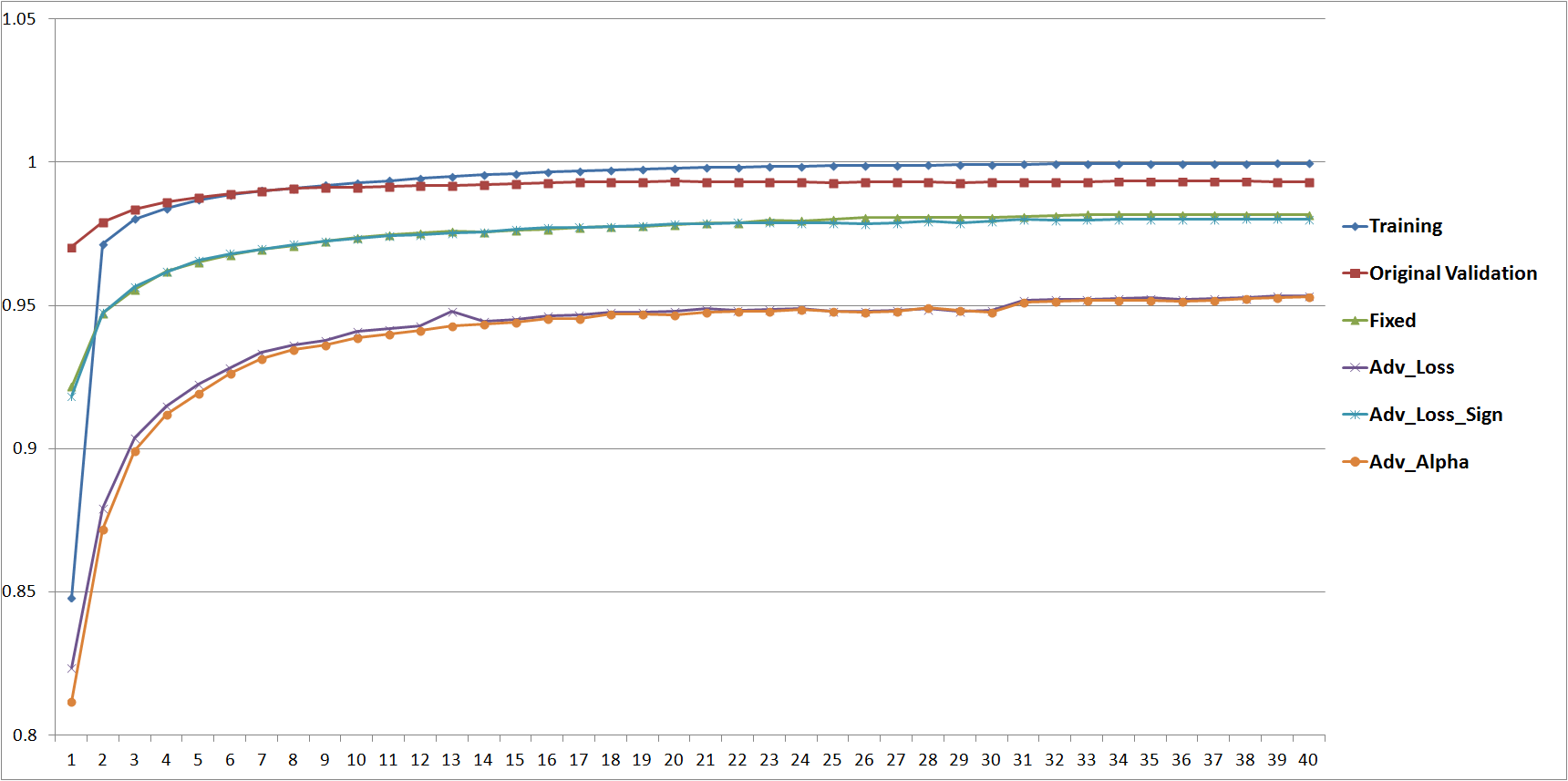}
\end{center}
\caption{Validation accuracies on MNIST for different perturbation methods. $x$ axis denotes the $\ell_2$ norm of the perturbation.}
\label{fig:LWALenet}
\end{figure}
The final result is summarized in Table \ref{tab:LWALenet}, which shows the great robustness of LWA. We don't observe the superiority of perturbing the representation layer to perturbing the raw data. In the learned networks, a small perturbation on the raw data incurs a much larger perturbation on the representation layer, which makes LWA\_Rep difficult to achieve both high accuracy and robustness. How to avoid such perturbation explosion in the representation network remains open for future investigation.
\vspace{-5mm}
\begin{table}[h]
\caption{Classification accuracies for LeNet trained using LWA on MNIST. The magnitude of perturbations are $1.5$ in $\ell_2$ norm.}
\label{tab:LWALenet}
\begin{center}
\begin{tabular}{lccccc}
\multicolumn{1}{c}{\bf METHODS}  & & & \multicolumn{1}{c}{\bf Validation Sets} \\
       & Validation & Fixed & Adv\_Loss\_Sign & Adv\_Loss & Adv\_Alpha
        \\ \hline \\
Normal      		 & 0.9912 	& 0.7456 	& 0.8082 	& 0.5194 	& 0.5014 \\
Dropout      		 & 0.9922 	& 0.7217 	& 0.7694 	& 0.5322 	& 0.4893 \\
Goodfellow's method	 & {\bf 0.9937} 	& 0.9744 	& 0.9755 	&  0.9066	& 0.9035 \\
LWA           		 & {\bf 0.9934}	& {\bf0.9822}	& {\bf0.9859}	& {\bf0.9632}	& {\bf0.9627} 
\end{tabular}
\end{center}
\end{table}
\vspace{-5mm}

\emph{\bf Experiments on CIFAR-10:} \\
CIFAR-10 is a more difficult task compared to MNIST. 
Inspired by VGG-D Network \citep{simonyan2014very}, we use a network formed by 6 convolution layers with 3 fully connected layers. 
Same to VGG-Network, we use ReLU as the activation function. 
For each convolution stage, we use three 3x3 convolution layers followed by a max-pooling layer with stride of 2. We use 128, 256 filters for each convolution stage correspondingly. For three fully connected layer, we use 2048, 2048, 10 hidden units. We also split this network in representation learner and classifier view: the last two fully connected layers with hidden units 2048 and 10 are classifier and other layers below formed a representation learner. 
We compare the following methods:
1. Normal training (Normal); 
2. Normal training with Dropout (Dropout); 
3. Goodfellow's method with Dropout (Goodfellow's method);
3. Learning with a strong adversary on raw data with Dropout (LWA); 
4. Learning with a strong adversary at the representation layer with Dropout (LWA\_Rep). We use batch normalization (BN) \citep{ioffe2015batch} to stabilize the learning of LWA\_Rep.
We also test the performances of different methods with batch normalization (BN).
The results are summarized in Table \ref{tab:LWACIFAR}. Learning with a strong adversary again achieves better robustness, but we also observe a small decrease on their classification accuracies. 

\begin{table}[h]
	\vspace{-3mm}
	\caption{Classification accuracies on CIFAR-10: the best performance on each adversarial sets are shown in bold. The magnitude of perturbations are $0.5$ in $\ell_2$ norm.}
	\label{tab:LWACIFAR}
	\begin{center}
		\begin{tabular}{lccccc}
			\multicolumn{1}{c}{\bf METHODS}  & & & \multicolumn{1}{c}{\bf Validation Sets} \\
			& Validation & Fixed & Adv\_Loss\_Sign & Adv\_Loss & Adv\_Alpha
			\\ \hline \\
			Normal   		 & 0.850 	& 0.769 	& 0.510 	&  0.420	& 0.410 \\
			Dropout         	 &  {\bf 0.881} 	& 0.774 	& 0.571 	&  0.480	& 0.466 \\
			Goodfellow's method	 &  0.856 	& 0.818 	& 0.794 	&  0.745	& 0.734 \\
			LWA   		 & 0.864 & {\bf 0.831} & {\bf0.803}  &{\bf 0.760} & {\bf0.750} \\
			\hline
			Normal + BN  		 & 0.887 	& 0.793 	& 0.765 	&  0.747	& 0.738 \\
			Dropout + BN        	 &  {\bf 0.895} 	& 0.810 	& 0.729 	&  0.697	& 0.687 \\
			Goodfellow's method	+ BN &  0.886 	& 0.856 	& 0.821 	&  0.775	& 0.754 \\
			LWA + BN  		 & 0.890 & {\bf 0.863} & {\bf0.823}  &{\bf 0.785} & {\bf0.759} \\
			LWA\_Rep + BN         & 0.832 	& 0.746		&  0.636 	& 0.604 	& 0.574 
		\end{tabular}
	\end{center}
		\vspace{-2mm}
\end{table}

\section{Conclusion}
We investigate the curious phenomenon of 'adversarial perturbation' in a formal min-max problem setting. A generic algorithm is developed based on the proposed min-max formulation, which is more general and allows to replace previous heuristic algorithms with formally derived ones. We also propose a more efficient way in finding adversarial examples for a given network. The experimental results suggests that learning with a strong adversary is promising in the sense that compared to the benchmarks in the literature, it achieves significantly better robustness while maintain high normal accuracy.

\subsubsection*{Acknowledgments}
We thank Ian Goodfellow, Naiyan Wang, Yifan Wu for meaningful discussions. Also we thank Mu Li for granting us access of his GPU machine to run experiments.
This work was supported by the Alberta Innovates Technology Futures and NSERC.

\newpage
\bibliography{LWA}

\begin{thebibliography}{16}
\providecommand{\natexlab}[1]{#1}
\providecommand{\url}[1]{\texttt{#1}}
\expandafter\ifx\csname urlstyle\endcsname\relax
  \providecommand{\doi}[1]{doi: #1}\else
  \providecommand{\doi}{doi: \begingroup \urlstyle{rm}\Url}\fi

\bibitem[Chen et~al.(2015)Chen, Li, Li, Lin, Wang, Wang, Xiao, Xu, Zhang, and
  Zhang]{chen2015mxnet}
Chen, Tianqi, Li, Mu, Li, Yutian, Lin, Min, Wang, Naiyan, Wang, Minjie, Xiao,
  Tianjun, Xu, Bing, Zhang, Chiyuan, and Zhang, Zheng.
\newblock Mxnet: A flexible and efficient machine learning library for
  heterogeneous distributed systems.
\newblock \emph{arXiv preprint arXiv:1512.01274}, 2015.

\bibitem[Fawzi et~al.(2015)Fawzi, Fawzi, and Frossard]{fawzi2015analysis}
Fawzi, Alhussein, Fawzi, Omar, and Frossard, Pascal.
\newblock Analysis of classifiers' robustness to adversarial perturbations.
\newblock \emph{arXiv preprint arXiv:1502.02590}, 2015.

\bibitem[Goodfellow et~al.(2014)Goodfellow, Shlens, and
  Szegedy]{goodfellow2014explaining}
Goodfellow, Ian~J, Shlens, Jonathon, and Szegedy, Christian.
\newblock Explaining and harnessing adversarial examples.
\newblock \emph{arXiv preprint arXiv:1412.6572}, 2014.

\bibitem[Hinton et~al.(2012{\natexlab{a}})Hinton, Deng, Yu, Dahl, Mohamed,
  Jaitly, Senior, Vanhoucke, Nguyen, Sainath, et~al.]{hinton2012deep}
Hinton, Geoffrey, Deng, Li, Yu, Dong, Dahl, George~E, Mohamed, Abdel-rahman,
  Jaitly, Navdeep, Senior, Andrew, Vanhoucke, Vincent, Nguyen, Patrick,
  Sainath, Tara~N, et~al.
\newblock Deep neural networks for acoustic modeling in speech recognition: The
  shared views of four research groups.
\newblock \emph{Signal Processing Magazine, IEEE}, 29\penalty0 (6):\penalty0
  82--97, 2012{\natexlab{a}}.

\bibitem[Hinton et~al.(2012{\natexlab{b}})Hinton, Srivastava, Krizhevsky,
  Sutskever, and Salakhutdinov]{hinton2012improving}
Hinton, Geoffrey~E, Srivastava, Nitish, Krizhevsky, Alex, Sutskever, Ilya, and
  Salakhutdinov, Ruslan~R.
\newblock Improving neural networks by preventing co-adaptation of feature
  detectors.
\newblock \emph{arXiv preprint arXiv:1207.0580}, 2012{\natexlab{b}}.

\bibitem[Ioffe \& Szegedy(2015)Ioffe and Szegedy]{ioffe2015batch}
Ioffe, Sergey and Szegedy, Christian.
\newblock Batch normalization: Accelerating deep network training by reducing
  internal covariate shift.
\newblock \emph{arXiv preprint arXiv:1502.03167}, 2015.

\bibitem[Krizhevsky \& Hinton(2009)Krizhevsky and
  Hinton]{krizhevsky2009learning}
Krizhevsky, Alex and Hinton, Geoffrey.
\newblock Learning multiple layers of features from tiny images, 2009.

\bibitem[Krizhevsky et~al.(2012)Krizhevsky, Sutskever, and
  Hinton]{krizhevsky2012imagenet}
Krizhevsky, Alex, Sutskever, Ilya, and Hinton, Geoffrey~E.
\newblock Imagenet classification with deep convolutional neural networks.
\newblock In \emph{Advances in neural information processing systems}, pp.\
  1097--1105, 2012.

\bibitem[LeCun et~al.(1998{\natexlab{a}})LeCun, Bottou, Bengio, and
  Haffner]{lecun1998gradient}
LeCun, Yann, Bottou, L{\'e}on, Bengio, Yoshua, and Haffner, Patrick.
\newblock Gradient-based learning applied to document recognition.
\newblock \emph{Proceedings of the IEEE}, 86\penalty0 (11):\penalty0
  2278--2324, 1998{\natexlab{a}}.

\bibitem[LeCun et~al.(1998{\natexlab{b}})LeCun, Cortes, and
  Burges]{lecun1998mnist}
LeCun, Yann, Cortes, Corinna, and Burges, Christopher~JC.
\newblock The mnist database of handwritten digits, 1998{\natexlab{b}}.

\bibitem[Miyato et~al.(2015)Miyato, Maeda, Koyama, Nakae, and
  Ishii]{miyato2015distributional}
Miyato, Takeru, Maeda, Shin-ichi, Koyama, Masanori, Nakae, Ken, and Ishii,
  Shin.
\newblock Distributional smoothing by virtual adversarial examples.
\newblock \emph{arXiv preprint arXiv:1507.00677}, 2015.

\bibitem[Neu \& Szepesv{\'a}ri(2012)Neu and
  Szepesv{\'a}ri]{neu2012apprenticeship}
Neu, Gergely and Szepesv{\'a}ri, Csaba.
\newblock Apprenticeship learning using inverse reinforcement learning and
  gradient methods.
\newblock \emph{arXiv preprint arXiv:1206.5264}, 2012.

\bibitem[N{\o}kland(2015)]{nokland2015improving}
N{\o}kland, Arild.
\newblock Improving back-propagation by adding an adversarial gradient.
\newblock \emph{arXiv preprint arXiv:1510.04189}, 2015.

\bibitem[Simonyan \& Zisserman(2014)Simonyan and Zisserman]{simonyan2014very}
Simonyan, Karen and Zisserman, Andrew.
\newblock Very deep convolutional networks for large-scale image recognition.
\newblock \emph{arXiv preprint arXiv:1409.1556}, 2014.

\bibitem[Szegedy et~al.(2013)Szegedy, Zaremba, Sutskever, Bruna, Erhan,
  Goodfellow, and Fergus]{szegedy2013intriguing}
Szegedy, Christian, Zaremba, Wojciech, Sutskever, Ilya, Bruna, Joan, Erhan,
  Dumitru, Goodfellow, Ian, and Fergus, Rob.
\newblock Intriguing properties of neural networks.
\newblock \emph{arXiv preprint arXiv:1312.6199}, 2013.

\bibitem[Tabacof \& Valle(2015)Tabacof and Valle]{tabacof2015exploring}
Tabacof, Pedro and Valle, Eduardo.
\newblock Exploring the space of adversarial images.
\newblock \emph{arXiv preprint arXiv:1510.05328}, 2015.

\end{thebibliography}
\bibliographystyle{iclr2016_conference}

\appendix
\section{Analysis of the Regularization Effect of Learning with an Adversary on Logistic Regression}
Perhaps the most simple neural network is the binary logistic regression. 
We investigate the behavior of learning with an adversary on the logistic regression model in this section. 
Let $\ell(z) = -\log (1+\exp (-z))$.
Thus logistic regression is to solve the following problem:
\begin{equation}
\label{eq:logisticregressionWA}
\min_{w} \sum_{i} \max_{\|r^{(i)}\|\le c} \ell(y_iw^{\top}(x_i + r^{(i)})).
\end{equation}

\begin{prop}
	The problem \ref{eq:logisticregressionWA} is still a convex problem.
\end{prop}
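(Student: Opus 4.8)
The plan is to show that the outer objective $F(w) = \sum_i L_i(w)$, with $L_i(w) = \max_{\|r^{(i)}\|\le c}\ell\bigl(y_i w^\top(x_i+r^{(i)})\bigr)$, is a convex function of $w$; since minimizing a convex function over $\real^d$ is by definition a convex program, the claim follows at once. The cleanest route avoids solving the inner maximization entirely and uses only the convexity of the scalar logistic loss $\ell$ together with the fact that pointwise suprema of convex functions are convex.

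First I would record that the logistic loss $\ell(z)=\log(1+\exp(-z))$ is convex in $z$ (e.g.\ $\ell''(z)=\exp(-z)/(1+\exp(-z))^2 \ge 0$); this is the only property of $\ell$ the primary argument needs. Next, fix a single admissible perturbation $r^{(i)}$ with $\|r^{(i)}\|\le c$. The map $w\mapsto y_i w^\top(x_i+r^{(i)})$ is affine in $w$, so $w\mapsto \ell\bigl(y_i w^\top(x_i+r^{(i)})\bigr)$ is convex, being a convex function composed with an affine map. Then $L_i$ is the pointwise supremum, over the ball $\{r^{(i)}:\|r^{(i)}\|\le c\}$, of this family of convex functions, hence convex. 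Summing over $i$ preserves convexity, so $F$ is convex and the problem is a convex program.

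For the purpose of exposing the regularization effect, which is the actual point of this appendix, I would additionally derive the closed form of $L_i$. Since $\ell$ is decreasing, $\max_{\|r^{(i)}\|\le c}\ell\bigl(y_i w^\top(x_i+r^{(i)})\bigr) = \ell\bigl(\min_{\|r^{(i)}\|\le c} y_i w^\top(x_i+r^{(i)})\bigr)$; and using the dual-norm identity $\|w\|_*=\max_{\|v\|\le1}\langle w,v\rangle$ together with symmetry of the ball one gets $\min_{\|r\|\le c}(y_i w)^\top r = -c\|y_i w\|_* = -c\|w\|_*$ (recall $|y_i|=1$). Hence $L_i(w)=\ell\bigl(y_i w^\top x_i - c\|w\|_*\bigr)$. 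The inner argument is concave in $w$ (an affine term minus the convex norm $\|w\|_*$), and $\ell$ is convex and nonincreasing, so $\ell$ composed with a concave argument is again convex, recovering the same conclusion while displaying $c\|w\|_*$ explicitly as a margin-shrinking regularizer.

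The main point to be careful about lies only in this second derivation: one must match the direction of monotonicity of $\ell$ to the sign of the dual-norm term, since the maximum over $r^{(i)}$ turns into a minimum of the linear argument precisely because $\ell$ decreases, and it is this that produces the $-c\|w\|_*$ term and keeps the composed argument concave rather than convex. The pointwise-supremum argument of the first paragraph sidesteps this issue completely, requiring no monotonicity and no explicit solution of the inner maximization, which is why I would present it as the primary proof and treat the closed-form derivation as the interpretive supplement.
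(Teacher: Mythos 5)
Your proof is correct, and your primary argument takes a genuinely different route from the paper's. The paper first solves the inner maximization in closed form (using that $\ell$ is decreasing, the max becomes $\ell(y_i w^\top x_i - c\|w\|_*)$) and then shows that a convex nonincreasing function composed with the concave map $w \mapsto y_i w^\top x_i - c\|w\|_*$ is convex, via a midpoint inequality. Your primary argument instead observes that for each fixed admissible $r^{(i)}$ the map $w \mapsto \ell\bigl(y_i w^\top(x_i + r^{(i)})\bigr)$ is convex (convex composed with affine), so $L_i$ is a pointwise supremum of convex functions and hence convex. This is more elementary and more general: it needs neither the monotonicity of $\ell$ nor any explicit solution of the inner problem, so it would apply verbatim to any convex loss and any perturbation set. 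What the paper's route (which you reproduce as your supplementary derivation) buys is the explicit form $\ell(y_i w^\top x_i - c\|w\|_*)$, which is what the rest of the appendix actually uses to discuss the induced regularizer $c\|w\|_*$; your write-up correctly flags the one delicate point there, namely that the sign of the dual-norm term and the direction of monotonicity of $\ell$ must match.
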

\begin{proof}
	In fact, there is a closed form solution for the maximization subproblem in \eqref{eq:logisticregressionWA}. Not that $\ell$ is strictly decreasing, thus, Equation \eqref{eq:logisticregressionWA} is equivalent to 
	\[
	\min_{w} \sum_{i} \ell(y_iw^{\top}x_i- c\|w\|_*).
	\]
	Let $h(w)$ denote the concave function of $w$, $y_iw^{\top}x_i- c\|w\|_*$. 
	To prove $\ell(h(w))$ is convex, consider
	\[
	\ell\left(h(w_1)) + \ell(h(w_2)\right) \,\ge 2\ell\left( \frac{h(w_1)+ h(w_2)}{2}\right) \, \ge 2\ell\left( h\left( \frac{w_1+w_2}{2}\right)\right),
	\]
	where the first inequality is because of the convexity of $\ell$, and the second inequality is because of the monotonicity of $\ell$ and the concavity of $h$.
\end{proof}

Let $R_z(w) = \max_{\|r^{(i)}\|\le c} \ell(y_iw^{\top}(x_i + r^{(i)})) - \ell(y_iw^{\top}x_i) \ge 0$. The behavior of $R_z$ is a data-dependent regularization. 
The next proposition shows that different to common $\ell_p$ regularization, it is NOT a convex function.    
\begin{prop}
	The induced regularization $R_z(w)$ is non-convex.
\end{prop}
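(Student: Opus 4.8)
The plan is to exhibit, for a suitable data point $z=(x_i,y_i)$ and a suitable perturbation budget $c$, a single ray through the origin along which $R_z$ fails to be convex; since convexity of $R_z$ on all of $\real^d$ would force convexity of its restriction to every line, one bad ray suffices. The guiding intuition is that $R_z$ is a nonnegative ``bump'' that vanishes at the origin and also decays back to $0$ in certain directions, and a convex function cannot do that.

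First I would invoke the closed form already established in the preceding proposition. Since $\ell$ is strictly decreasing, the inner maximization is solved by using the entire budget to lower the margin, giving
\[
R_z(w) = \ell\!\left(y_i w^\top x_i - c\|w\|_*\right) - \ell\!\left(y_i w^\top x_i\right).
\]
This form immediately records two facts I will use: $R_z(w)\ge 0$ (the first argument is the smaller one) and $R_z(0)=0$.

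Next I would fix a direction $u$ satisfying $y_i u^\top x_i > c\|u\|_* > 0$ and analyze $f(t):=R_z(tu)$ for $t\ge 0$. Such a $u$ exists as soon as $c$ is below the (primal) norm of the data point, because $\sup_{u\ne 0}\frac{y_i u^\top x_i}{\|u\|_*}=\|x_i\|$ by dual-norm duality; concretely $u\propto y_i x_i$ works whenever $c<\|x_i\|_2^2/\|x_i\|_*$. Writing $a=y_i u^\top x_i$ and $b=a-c\|u\|_*$, so that $0<b<a$, homogeneity gives
\[
f(t) = \ell(tb) - \ell(ta).
\]
Because $\ell$ is strictly decreasing and $b<a$, we get $f(t)>0$ for every $t>0$, while the finite limit of $\ell$ at $+\infty$ (for the logistic loss $\ell(s)\to 0$) forces both terms to the same limit, so $f(t)\to 0$ as $t\to\infty$.

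Finally I would close the argument by contradiction. If $R_z$ were convex, then $f$ would be convex on $[0,\infty)$ with $f(0)=0$; fixing any $t_0>0$ and writing $t_0$ as a convex combination of $0$ and $t>t_0$ yields $f(t)\ge (t/t_0)\,f(t_0)$, which tends to $\infty$ since $f(t_0)>0$, contradicting $f(t)\to 0$. Hence $R_z$ is not convex. The only delicate points are (i) guaranteeing an admissible direction, which reduces to the mild and natural requirement that the budget $c$ be smaller than the norm of $x_i$ (otherwise the adversary can overpower every example and the regularizer degenerates), and (ii) quoting the correct behaviour of $\ell$ at $+\infty$. I expect the main obstacle to be purely expository: packaging the ``bump that decays to zero'' picture into the one-line ray estimate above, rather than wrestling with the piecewise structure of $\|w\|_*$ or computing second derivatives directly.
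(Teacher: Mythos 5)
Your proof is correct, and it takes a genuinely different route from the paper's. The paper simply instantiates a one--dimensional counterexample ($c=0.5$, $z=(1,1)$, so $R_z(w)=\log(1+e^{-0.5w})-\log(1+e^{-w})$ for $w>0$) and then asserts that ``the figure of this function clearly shows its non-convexity''---i.e.\ it appeals to a plot rather than giving an analytic argument (and its displayed formula is in fact only the $w>0$ branch, since $\|w\|_*=|w|$ flips sign for $w<0$). You instead argue on a ray: using the closed form $R_z(w)=\ell(y_iw^\top x_i-c\|w\|_*)-\ell(y_iw^\top x_i)$, you pick a direction $u$ with $y_iu^\top x_i>c\|u\|_*>0$, reduce to $f(t)=\ell(tb)-\ell(ta)$ with $0<b<a$, and observe that a convex function with $f(0)=0$ and $f(t_0)>0$ must grow at least linearly, contradicting $f(t)\to 0$. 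This is fully rigorous (no figure, no second derivatives, no case analysis on the kink of $\|\cdot\|_*$), and it is strictly more general: it shows non-convexity for every data point with $c$ below the norm of $x_i$, which in particular covers the paper's example $c=0.5<\|x\|=1$. The only caveat, which you already flag, is that your argument needs the non-degeneracy condition $c<\|x_i\|$; since the proposition only requires exhibiting some non-convex instance, this is harmless. The paper's approach buys brevity at the cost of rigor; yours buys a clean structural explanation (a nonnegative bump that vanishes at the origin and decays at infinity cannot be convex) at the cost of a few extra lines.
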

\begin{proof}
	An counter example is enough to prove that $R_z(w)$ is non-convex. Let $c = 0.5$ and $z = (x,y) = (1,1)$, then $R_z(w) : \real \rightarrow \real$ is
	\[
	R_z(w)  = \log \left( 1+\exp (-0.5w)\right) - \log \left( 1+\exp (-w)\right). 
	\]
	The figure of this function clearly shows its non-convexity.
\end{proof} 

One of the common problem about logistic regression without regularization is that there is no valid solution given a linearly separable data set $\uZ$. 
The following proposition shows that this issue is relieved by learning with an adversary.
\begin{prop}
	Assume that the linearly separable data set $\uZ$ has a margin less than $2c$, then logistic regression with adversary is guaranteed to have bounded solutions. 
	Moreover, if $\|\cdot\|_*$ is twice differentiable almost everywhere and $\sum_i\frac{\partial Q_i}{\partial w} \frac{\partial Q_i}{\partial w}^{\top}$ is positive definite where $Q_i = -y_ix_i^{\top}w + \|w\|_*$, then logistic regression with adversary has unique solution.
\end{prop}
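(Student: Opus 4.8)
The plan is to first collapse the inner maximisation, exactly as in the convexity proof above, and then treat boundedness and uniqueness separately. Since $\ell$ is strictly decreasing, $\max_{\|r^{(i)}\|\le c}\ell\!\left(y_iw^{\top}(x_i+r^{(i)})\right)=\ell\!\left(y_iw^{\top}x_i-c\|w\|_*\right)$, so the problem reduces to minimising the convex function
\begin{equation*}
F(w)=\sum_i \ell\!\left(m_i(w)\right),\qquad m_i(w)=y_iw^{\top}x_i-c\|w\|_*,
\end{equation*}
where each effective margin $m_i$ is concave (affine minus a norm) and $\ell\ge 0$. Boundedness of the solution set will follow from coercivity of $F$, and uniqueness from strict convexity, which I will read off from the Hessian wherever the dual norm is differentiable.

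For boundedness I would read the hypothesis ``margin less than $2c$'' as the statement that the half-width $\gamma:=\max_{w\neq 0}\min_i \frac{y_iw^{\top}x_i}{\|w\|_*}$ of the maximal separating slab satisfies $\gamma<c$. Fix a direction $u$ with $\|u\|=1$ and look along the ray $w=tu$, $t>0$: by definition of $\gamma$ one has $\min_i y_iu^{\top}x_i\le \gamma\|u\|_*$, hence $\min_i m_i(tu)=t\bigl(\min_i y_iu^{\top}x_i-c\|u\|_*\bigr)\le t(\gamma-c)\|u\|_*$. Since $\gamma-c<0$ and, by equivalence of norms in finite dimension, $\|u\|_*\ge\kappa>0$ on the unit sphere, compactness yields a uniform bound $\min_i m_i(tu)\le -\delta t$ with $\delta>0$. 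As $\ell$ is non-negative and decreasing, $F(tu)\ge \ell\!\left(\min_i m_i(tu)\right)\ge \ell(-\delta t)\to\infty$, so $F$ is coercive; a coercive continuous convex function attains its minimum on a nonempty compact set, which is the asserted boundedness.

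For uniqueness I would work at any $w$ where $\|\cdot\|_*$ is twice differentiable and compute
\begin{equation*}
\nabla^2 F(w)=\sum_i \ell''(m_i)\,\nabla m_i\,\nabla m_i^{\top}+\sum_i \ell'(m_i)\,\nabla^2 m_i .
\end{equation*}
Here $\ell''>0$ and $\ell'<0$ everywhere, while $\nabla^2 m_i=-c\,\nabla^2\|w\|_*\preceq 0$ by convexity of the norm, so the second sum is positive semidefinite. For the first sum, $\nabla m_i=y_ix_i-c\,\nabla\|w\|_*=-\tfrac{\partial Q_i}{\partial w}$, so it equals $\sum_i \ell''(m_i)\,\tfrac{\partial Q_i}{\partial w}\tfrac{\partial Q_i}{\partial w}^{\top}$; because every weight $\ell''(m_i)$ is strictly positive and $\sum_i \tfrac{\partial Q_i}{\partial w}\tfrac{\partial Q_i}{\partial w}^{\top}\succ 0$ by hypothesis, this weighted sum is positive definite. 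Hence $\nabla^2 F\succ 0$ wherever it is defined, which I would then promote to strict convexity of $F$ and therefore to a unique minimiser.

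The delicate step is exactly this last promotion: $\|\cdot\|_*$ fails to be differentiable on a set that has measure zero in $\real^n$ but need not meet a given segment in measure zero, so ``positive definite Hessian almost everywhere'' does not by itself force strict convexity along an arbitrary line. I expect this to be the main obstacle. The clean way around it is to avoid Hessians and instead trace equality through the convexity chain used in the first proposition: if $F$ is affine on a segment $[w_1,w_2]$, then, because $\ell$ is strictly convex and strictly decreasing, equality forces $m_i(w_1)=m_i(w_2)$ \emph{and} $m_i$ affine on the segment for every $i$; one then argues that positive-definiteness of $\sum_i \tfrac{\partial Q_i}{\partial w}\tfrac{\partial Q_i}{\partial w}^{\top}$ is incompatible with the existence of such a nontrivial segment, forcing $w_1=w_2$. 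Making this incompatibility rigorous across the kinks of the dual norm is where the genuine care is required.
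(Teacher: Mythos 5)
Your proposal follows essentially the same route as the paper: collapse the inner maximisation using monotonicity of the loss, prove boundedness from the sub-$2c$ margin, and get uniqueness from positive definiteness of the Hessian built from $\sum_i\frac{\partial Q_i}{\partial w}\frac{\partial Q_i}{\partial w}^{\top}$ plus the positive semidefinite curvature of the dual norm. Two remarks on where you diverge in execution. For boundedness, the paper argues by homogeneity: non-separability of the perturbed data gives, for each $w$, some $i$ with $-y_iw^{\top}x_i+c\|w\|_*>0$, and since this quantity is $1$-homogeneous the corresponding loss term blows up along any unbounded sequence; your coercivity argument with a uniform $\delta>0$ obtained by compactness of the unit sphere is a more careful version of the same idea (the paper's phrasing leaves the strictness and the uniformity over directions implicit, which your version supplies). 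For uniqueness, your Hessian computation matches the paper's, and the obstacle you flag is genuine --- but it is not a defect of your write-up relative to the paper: the paper's own proof computes the Hessian only where $\|\cdot\|_*$ is twice differentiable, declares the objective ``strictly convex,'' and stops, so it silently assumes exactly the promotion from ``positive definite Hessian almost everywhere'' to strict convexity that you correctly identify as non-automatic (a segment can lie entirely inside the measure-zero kink set of the dual norm, as for $\ell_1/\ell_\infty$). Your sketched repair --- tracing equality through the convexity chain and showing that an affine segment of the objective forces each $m_i$ to be affine there, contradicting positive definiteness of $\sum_i\frac{\partial Q_i}{\partial w}\frac{\partial Q_i}{\partial w}^{\top}$ --- is the right direction, but as you acknowledge it is left incomplete; completing it would actually close a gap that the published proof does not address.
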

\begin{proof}
	Since the margin of $\uZ$ is less than $2c$, after the data is perturbed, it is no longer linear separable. 
	Therefore for any $w$, there exists some $i$ such that $-y_iw^{\top}x_i + c\|w\|_*$. Note that $-y_iw^{\top}x_i + c\|w\|_*$ is homogeneous in $w$. Thus, if the optimal solution $w^*$ has a infinity norm, then 
	\[
	\sum_i \log\left(1+ \exp(-y_iw^{*\top}x_i + c\|w^*\|_*) \right) = \infty.
	\]
	However, assigning $w=0$ have finite loss. Therefore, $w^*$ is bounded.
	
	Moreover, since $\|\cdot\|_*$ is twice differentiable, consider the Hession matrix
	\[
	H = \sum_i \frac{e^{Q_i}}{(1+e^{Q_i})^2} \frac{\partial Q_i}{\partial w} \frac{\partial Q_i}{\partial w}^{\top} + \frac{e^{Q_i}}{(1+e^{Q_i})^2}\frac{\partial^2Q_i}{\partial^2w}.
	\] 
	Note that $ \|w\|_*$ is convex, so is $Q_i$. Thus $\frac{\partial^2Q_i}{\partial^2w}$ is positive semi-definite. 
	Since $\sum_i\frac{\partial Q_i}{\partial w} \frac{\partial Q_i}{\partial w}^{\top}$ is positive definite, and $\frac{e^{Q_i}}{(1+e^{Q_i})^2} >0$ for any $i$, $H$ is positive definite. 
	Therefore, the objective function is strictly convex with respect to $w$.
	Combining the boundedness of $w$ and being strictly convex of $\ell$ leads to the unique of $w^*$. 
\end{proof}
\begin{remark}
	Unlike adding a regularization term to the objective function, learning with an adversary can only guarantee unique solutions, when the data set is linearly separable with small margin. In such sense, learning with an adversary is a weaker regularization.
\end{remark}
\end{document}